\def\Bc{{\mathcal B}}
\def\Dc{{\mathcal D}}
\def\Ec{{\mathcal E}}
\def\Gc{{\mathcal G}}
\def\Hc{{\mathcal H}}
\def\Kc{{\mathcal K}}
\def\Lc{{\mathcal L}}
\def\Mc{{\mathcal M}}
\def\Nc{{\mathcal N}}
\def\Nbb{{\mathbb N}}
\def\Oc{{\mathcal O}}
\def\Pc{{\mathcal P}}
\def\pbf{{\mathbf p}}
\def\Qc{{\mathcal Q}}
\def\Rbb{{\mathbb R}}
\def\Sc{{\mathcal S}}
\def\sbf{{\mathbf s}}
\def\Tc{{\mathcal T}}
\def\tbf{{\mathbf t}}
\def\Uc{{\mathcal U}}
\def\ubf{{\mathbf u}}
\def\Vc{{\mathcal V}}
\def\xbf{{\mathbf x}}
\def\ybf{{\mathbf y}}
\def\Zc{{\mathcal Z}}
\def\zbf{{\mathbf z}}
\def\ep{{\epsilon}}
\def\lam{{\lambda}}
\def\0{{\bf 0}}
\newcommand{\bitem}{\begin{itemize}}
\newcommand{\eitem}{\end{itemize}}
\newcommand{\btabular}{\begin{tabular}}
\newcommand{\etabular}{\end{tabular}}
\newcommand{\bcenter}{\begin{center}}
\newcommand{\ecenter}{\end{center}}
\newcommand{\bea}{\begin{eqnarray}}
\newcommand{\eea}{\end{eqnarray}}
\newcommand{\bean}{\begin{eqnarray*}}
\newcommand{\eean}{\end{eqnarray*}}
\newcommand{\ba}{\left. \begin{array}}
\newcommand{\ea}{\\ \end{array} \right.}
\newcommand{\bab}{\left[ \begin{array}}
\newcommand{\eab}{\\ \end{array} \right]}
\newcommand{\bap}{\left( \begin{array}}
\newcommand{\eap}{\\ \end{array} \right)}
\newcommand{\bbm}{ \begin{bmatrix}}
\newcommand{\ebm}{\\ \end{bmatrix} }
\newcommand{\bear}{\begin{array}}
\newcommand{\eear}{\\ \end{array}}
\newcommand{\wt}{\widetilde}
\newcommand{\bs}{\boldsymbol}
\newcommand{\non}{\nonumber}
\newcommand{\ra}{\rightarrow}
\font\myownfont=cmr17 scaled \magstep5
\def\psfancypar#1#2{\def\biginitial#1{{\myownfont#1}}%
  \def\makeinitial#1{\setbox8\hbox{\strut\vbox to 1.3ex
    {\hbox{\biginitial#1}\vskip -4pc plus 3.5pc minus 3.5pc}}}%
  \makeinitial#1%
  \ifdim\parindent>1.3\wd8\dimen8=\parindent
     \else\dimen8=1.3\wd8\fi
  \hangindent=\dimen8\hangafter=-2
  \noindent
  \strut\hskip-1\dimen8\box8{\sc#2}}%
\newcounter{subequation}
\def\beasub{\addtocounter{equation}{+1}
\setcounter{subequation}{\value{equation}}
\setcounter{equation}{0}
\renewcommand{\theequation}{\arabic{subequation}\alph{equation}}
\begin{eqnarray}}
\def\eeasub{\end{eqnarray}
\setcounter{equation}{\value{subequation}}
\renewcommand{\theequation}{\arabic{equation}}}
\DeclareMathOperator*{\argmax}{argmax}
\newtheorem{theorem}{Theorem}[section]
\newtheorem{lemma}[theorem]{Lemma}
\newtheorem{problem}[theorem]{Problem}
\newtheorem{remark}[theorem]{Remark}
\DeclareMathOperator*{\argmin}{arg\,min}
\DeclareMathOperator*{\logdet}{logdet}
\newcommand{\linebreakand}{%
\end{@IEEEauthorhalign}
\hfill\mbox{}\par
\mbox{}\hfill\begin{@IEEEauthorhalign}
}
\title{\LARGE \bf Spatially temporally distributed informative path planning for multi-robot systems}
\author{Binh Nguyen, Linh Nguyen, Truong X. Nghiem, Hung La,\\
Jos\'e Baca, Pablo Rangel, Miguel Cid Montoya, Thang Nguyen
\thanks{$^\star$Corresponding author: Thang Nguyen (thang.nguyen@tamucc.edu)}
\thanks{This work was supported by the U.S. National Science Foundation (NSF) under grants NSF-CAREER: 1846513, NSF-PFI-TT: 1919127, and NSF-CAREER: 2238296, and by the Army Research Office under Grant Number W911NF-23-1-0186.}
\thanks{Binh Nguyen, Thang Nguyen, Jos\'e Baca, Miguel Cid Montoya, and Pablo Rangel are with the Department of Engineering, Texas A\&M University–Corpus Christi, Corpus Christi, TX 78412, USA 
}
\thanks{Truong X. Nghiem is with the School of Informatics, Computing, and Cyber Systems, Northern Arizona University, Flagstaff, AZ 86011, USA}
\thanks{Linh Nguyen is with the Institute of Innovation, Science and Sustainability, Federation University Australia, Churchill, VIC 3842, Australia}
\thanks{Hung M. La is with the Advanced Robotics and Automation (ARA) Lab, Department of
Computer Science and Engineering, University of Nevada, Reno, NV 89557, USA 
}%
}
\date{December 2023}
\begin{document}

\maketitle

\begin{abstract}
 This paper investigates the problem of 
 informative path planning for a mobile robotic sensor network in spatially temporally distributed mapping.
 The robots are able to gather noisy measurements from an area of interest during their movements to build a Gaussian Process (GP) model of 
 a spatio-temporal field.
  The model is then utilized to predict the spatio-temporal phenomenon at different points of interest.  
 To spatially and temporally navigate the group of robots so that they can optimally acquire maximal information gains while their connectivity is preserved, we propose a novel multi-step prediction informative path planning optimization strategy employing our newly defined local cost functions.
By 
using the dual decomposition method, it is 
feasible and practical to effectively solve the optimization problem in a distributed 
manner.
 The proposed method was validated through synthetic experiments 
 utilizing real-world data sets. 
 \end{abstract}

\section{Introduction}
Understanding natural phenomena is crucial in many fields of science and technology.
However, collecting data with stationary sensors is often costly and time-consuming.
Mobile robotic sensor networks (MRSN) offer a new way to generate such spatio-temporal (ST) data since MRSN can be quickly deployed and target data collection in areas of high information value. 
With the development of unmanned vehicles in all fields (ground, surface water, underwater, and air), this approach can solve many monitoring and observation tasks \cite{dunbabin2012robots}. Compared to sensing by stationary sensor nodes, the main challenge of using the data collected by MRSN to model ST phenomena is that the number of observation positions, hence the number of measurements taken at a sampling instant, is restricted by the limited number of robots and their mobility constraints.
Thus, successful ST mapping solutions with mobile robots must take into account these limitations. 

Recently, Gaussian Process Regression (GPR) has 
received significant interest as a technique for discovering ST data correlations.
GPR provides a fundamental framework for nonlinear non-parametric Bayesian inference widely used in soil organic matter mapping \cite{nguyen2021mobile}, temperature mapping~\cite{lin2020distributed} and leakage detection \cite{yan2023confident}.
The use of non-parametric models opens possibilities for mapping solutions to remain generic and flexible, since hyperparameters are able to be adjusted to create more accurate practical models for some specific applications. GPR also provide an estimation of forecast uncertainties and provides opportunities for future planning algorithms focusing on uncertainties. This spatial and temporal mapping technique GPR can be used in any situation in which a mobile robot is faced with a phenomenon that differs in time and space. For example, as an typical exploration task, GPR for spatial temporal maps can be established for understanding temperature, chemical concentration and water flow. In the field of robotics, it may be extremely valuable for precise control to have the ability to model and predict environmental disturbances and then to make appropriate strategies against the impacts of disturbances. 

With the help of MRSN, ST mapping has been intensively investigated to observe and model temporal changes of unknown environments.
The authors in \cite{singh2010modeling}
take advantage of mobile sensors to build a map of spatio-temporal phenomena via GPR; however, the restrictions on the movement of mobile sensors were not considered. 
The study in \cite{ma2017informative}
attempted to capture a slow-changing phenomenon in real-time operation, but it is assumed that the phenomenon is static during robot measurements.
Recently, the authors in \cite{sears2022mapping} presented spatial-temporal mapping with observations from a single robot traversing on a fixed-path design. 

Among robot planning methods in exploration tasks,
informative path planning (IPP) 
\cite{Schmid2020traj} has excellent performance, as future paths are generated by estimated environment models.
The core 
idea of the IPP is based on minimizing prediction uncertainties, 
which leads to designing optimal routes to collect measurements.
In the literature, the centralized IPP can be found in \cite{nguyen2015information,le2021efficient}, where the observed data is collected in a central unit to update a surrogate model. Then, optimal paths are computed and sent to each robot.
These works meet inherent restrictions since a tremendous amount of data collected by many robots possibly results in 
congestion in both communication and computation.
In recent years, several studies have been devoted to distributed IPP 
\cite{ding2024resource,binh2024ipp} with regard to GPR.
However, none of them takes into account the mapping of 
spatial-temporal phenomena and 
the dynamics of actual robots.
Furthermore, the cost functions of the IPP used in these studies are separated, i.e., each robot has its own cost function related only to its future path.
This setup ignores the cross-relation between the future paths of neighbor robots.

Motivated by the above discussion, this article presents \emph{a new distributed IPP approach for 
mapping spatial-temporal fields by using multiple robots}.
In other words, 
we propose a spatially and temporally distributed prediction scheme based on GPR while the connectivity of the robot team is preserved during 
their movements.
Another key contribution of this paper is \emph{the 
novel local cost functions for the IPP optimization problem with respect to the future paths of neighbor robots}.
The proposed spatially temporally distributed IPP approach 
was validated by mapping spatio-temporal temperature using a real-world dataset. 

The organization of this paper is as follows. Section II briefly presents models of mobile robots for monitoring a spatio-temporal fields.
The IPP optimality with connectivity preservation is then presented in Section III. 
Next, Section IV describes the distributed implementation of the proposed IPP algorithm.
Finally,
simulation results obtained by implementing the proposed approach 
using the real-world dataset 
in synthetic environments are 
discussed in Section
V. 
The conclusions 
are described in Section VI. 

{\bf Notations:}
Let us denote
$\Nbb$ and $\mathbb{R}$ as the sets of natural and  real numbers, respectively,  
$\otimes$ as the Kronecker product, and $\textbf{1}_n \in \Rbb^n$ as a vector in which each element is 1.
With a set of integers (index set) $\Zc = \{z_1, z_2,\dots, z_n \,|\, i_j \in \Nbb\}$, define
$\big[ \Mc_i \big]_{i \in \Zc }
=  \bbm \Mc_{z_1} \\ \vdots \\ \Mc_{z_n}  \ebm$ as a block of matrices with appropriate dimension or a vector of scalars $\Mc_i$.

\section{Mobile Robots for Monitoring Spatio-Temporal Fields}

Consider a convex set $\Qc \in \Rbb^\ell$ standing for an operation space of all robots.
Let us define $M$ as a number of robots working in $\Qc$, and we assume that the communication area of each robot $i$ at anytime 
is a ball (or a circle in 2D) centered at $\pbf_{i,k}$ with 
radius $R$. 
Here, $\pbf_{i,k}$ is the location of robot $i$ at time $t_k$. 
In this paper, the spatio-temporal field of interest is considered as a latent relationship
$z : (\Qc, \Rbb^+) \ra \Rbb$ mapping a location of measurement  in
$\Qc$ and its current time $t_k$ to a spatio-temporal phenomenon. 
The robot $i$ observes a noisy
measurement $y_{i,k} \in \Rbb$ of the spatio-temporal field $z$ at its current position for every time step.
In addition, robot movements are described as 
\begin{align}
	\pbf_{i,k+1} = A_k \pbf_{i,k} + B_k \ubf_{i,k},
	\label{sysdyn}
\end{align}
where $\ubf_{i,k}$ stands for the bounded control input ($\Vert \ubf_{i,k}\Vert_\infty \leq \delta_i$) of robot $i$ between two consecutive time steps  with $\delta_i$ being the maximum magnitude of control input $\ubf_{i,k}$.
Additionally, $A_k$ and $B_k$ represent the matrices obtained by linearizing the dynamics of the robot at time step $k$.

Based on the above setups, let us define that robots $i$ and $j$ are connected at step $k$ if $\Vert\pbf_{i,k}-\pbf_{j,k}\Vert_2\leq R$. Accordingly,
let $\Ec_k$ be the set of pairs of connected robots $(i, j)$ at time step $k$, that is,
$\Ec_k = \left\{(i,j)\in \Vc\times\Vc : \Vert \pbf_{i,k} - \pbf_{j,k}\Vert_2\leq R \right\}.    
$
Denote $\Vc = \{1,2,\dots, M\}$ as a set of indexed vertices in which each robot represents a vertex.
Then, let $\Gc_k$ be an undirected graph established by set of vertices $\Vc$ and edges $\Ec_k$. Note that  $\Gc_k$ varies over time.
The undirected graph $\Gc_k$ is connected if there exists at least a path between any pair of robots.
Robot $i$ is considered a neighbor of robot $j$  if they are connected.

Denote $ \Nc_{i,k}$ as a set of neighbors of robot $i$ and let
$y_{i,k}$ be a measured value of the robot $i$ at the time $t_k$ at location $\pbf_{i,k}$.
Denote $\Dc_{i,k}$ as a dataset of robot $i$ collected up to time $t_k$.
The local data set $\Dc_{i,k}$ can be decomposed from the sets $\Dc_{i,k}^y, \Dc_{i,k}^\pbf, \Dc_{i,k}^\tau$ of all measurements, locations, and timestamps. 
Consequently, the data exchange in robot $i$ is described by $\Dc_{i,k} = \left(\cup_{j \in \Nc_{i,k}^+ } \Dc_{j,k-1} \right) \cup \{y_{i,k},\pbf_{i,k}, t_k\}$ where $\Nc_{i,k}^+ = \Nc_{i,k} \cup \{i\}$. 
In this setup, each robot has a measurement model as follow
\begin{equation}
        y_{i,k} = z(\pbf_{i,k}, t_k) + \delta_{i,k},
    \end{equation}
    where $\delta_{i,k} \sim \Nc(0,\sigma_{i,k}^2) $
is an independent and identically distributed
zero-mean Gaussian noise with standard deviation $\sigma_{i,k} > 0$,
and $z \sim \Gc\Pc(\mu,\Kc(\pbf,\pbf^\prime,t,t^\prime))$ is the
random/latent variable with covariance funcion $\Kc$ and mean $\mu$ which can be set as a deterministic function (constant, polynomial or periodic) or determined by observed data such as neural network \cite{rizzo1994characterization}.

\section{Informative Path Planning with network connectivity preservation}

The movements of robots possibly disrupt the connectivity of sensor network. Thus, this section proposes a distributed algorithm to ensure that the connectivity of robot network is preserved in the next step.
To be specific, if the network is currently connected, then by maintaining some edges, the network will be connected in the next step. 
We then formulate the IPP optimization problem given the robot dynamics and connectivity constraints.

At the beginning, we recall previous results in \cite{binh2024ipp}.
Let us define
$
\wt\Ec_{i,k+1} = \big\{(v,n) \in \Nc_{i,k}^+ \times \Nc_{i,k}^+, v\ne n \big|
\Vert \pbf_{v,k+1} - \pbf_{n,k+1}\Vert_2\leq R \big\}  
$
 as a set of connection at $k+1$ established by neighbors of robot $i$ at time step $k$.
Accordingly, let $\wt\Gc_{i,k+1}$ be the sub-graph defined at time step $k+1$ induced by $(\Nc_{i,k}^+, \wt\Ec_{i,k+1})$.
In this paper, we assume that the graph $\Gc_0$ 
is connected at the initial time $t_0$.

\begin{lemma}[\cite{binh2024ipp}]\label{lem1}
	Suppose that $\Gc_k$ is connected. If $\wt\Gc_{i,k+1}$ is connected for all $i\in \Vc$, then $\Gc_{k+1}$ is also connected.
\end{lemma}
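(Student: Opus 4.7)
The plan is to show that $\Gc_{k+1}$ is connected by constructing, for any two robots $u,v \in \Vc$, an explicit path between them using edges of $\Ec_{k+1}$. The backbone of the construction is a path in the (connected) graph $\Gc_k$, and the individual ``hops'' of that backbone are re-routed through the local sub-graphs $\wt\Gc_{i,k+1}$, which are connected by hypothesis.

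First, I would fix arbitrary $u,v\in\Vc$ and invoke the connectivity of $\Gc_k$ to obtain a path $u=w_0,w_1,\dots,w_m=v$ in $\Gc_k$. By definition of $\Ec_k$ and $\Nc_{i,k}$, the relation $(w_l,w_{l+1})\in \Ec_k$ is equivalent to $w_{l+1}\in \Nc_{w_l,k}$, so both $w_l$ and $w_{l+1}$ lie in the vertex set $\Nc_{w_l,k}^+$ of the sub-graph $\wt\Gc_{w_l,k+1}$. The hypothesis that $\wt\Gc_{w_l,k+1}$ is connected then yields a path between $w_l$ and $w_{l+1}$ consisting solely of edges in $\wt\Ec_{w_l,k+1}$.

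Next, I would observe that $\wt\Ec_{i,k+1}\subseteq \Ec_{k+1}$ for every $i\in\Vc$: indeed, any pair $(v,n)\in \wt\Ec_{i,k+1}$ satisfies $v,n\in\Nc_{i,k}^+\subseteq \Vc$ and $\Vert \pbf_{v,k+1}-\pbf_{n,k+1}\Vert_2\leq R$, which is exactly the membership condition in $\Ec_{k+1}$. Therefore each of the local paths produced above is in fact a path in $\Gc_{k+1}$. Concatenating these local paths over $l=0,1,\dots,m-1$ gives a walk (and hence a path) from $u$ to $v$ in $\Gc_{k+1}$, establishing its connectivity.

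The argument is largely bookkeeping on vertex/edge sets, so there is no serious analytic obstacle. The only subtle step is to keep the inclusions straight: one must verify that the vertex sets of the sub-graphs $\wt\Gc_{i,k+1}$ embed into $\Vc$ so that the local paths can be concatenated, and that $\wt\Ec_{i,k+1}\subseteq \Ec_{k+1}$ so that the concatenated walk is a genuine walk in $\Gc_{k+1}$. Once those two containments are noted, the proof is essentially a chaining argument.
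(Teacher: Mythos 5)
Your argument is correct: the chaining construction (take a $u$--$v$ path in the connected graph $\Gc_k$, replace each hop $(w_l,w_{l+1})\in\Ec_k$ by a path inside the connected sub-graph $\wt\Gc_{w_l,k+1}$, and use the containment $\wt\Ec_{i,k+1}\subseteq\Ec_{k+1}$ to concatenate) is exactly the standard proof of this lemma. The paper itself supplies no proof here --- it cites the result from \cite{binh2024ipp} --- so there is nothing to contrast with; your write-up correctly identifies the only two containments that need checking and would serve as a complete self-contained proof.
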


\begin{theorem} \label{pres}
Suppose that:
(i) $\Gc_k$ is connected;
(ii) at time step $k+1$, robot $i$ is connected with robots in $\Sc_{i,k}$ determined by \textit{Algorithm~\ref{Alg1}} for all $i \in \Vc$.
Then $\Gc_{k+1}$ is connected.
\end{theorem}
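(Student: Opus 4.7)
The plan is to derive Theorem 1 as an immediate consequence of Lemma 1. Since Lemma 1 already gives a sufficient condition for $\Gc_{k+1}$ being connected, namely that each local sub-graph $\wt\Gc_{i,k+1}$ is connected, the theorem reduces to verifying this local connectivity hypothesis from the conclusion ``robot $i$ is connected with robots in $\Sc_{i,k}$ at time step $k+1$''.

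First, I would fix an arbitrary $i \in \Vc$ and focus on the induced sub-graph $\wt\Gc_{i,k+1}$ whose vertex set is $\Nc_{i,k}^+ = \Nc_{i,k} \cup \{i\}$. I would unpack what ``robot $i$ is connected with robots in $\Sc_{i,k}$'' means in graph-theoretic terms: every vertex in $\Sc_{i,k}$ has an edge to $i$ in $\wt\Gc_{i,k+1}$, so the subset $\Sc_{i,k}\cup\{i\}$ is connected through a star centered at $i$. The core step is then to argue that Algorithm~\ref{Alg1} selects $\Sc_{i,k}$ in such a way that every remaining vertex of $\Nc_{i,k}^+\setminus(\Sc_{i,k}\cup\{i\})$ is still joined to this star either directly or via a path whose edges are preserved at step $k+1$. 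The natural way to capture this is to appeal to the algorithm's selection rule (which presumably retains, for each $i$, a set $\Sc_{i,k}$ that forms a spanning structure, e.g.\ a spanning tree rooted at $i$ or a dominating/Connected set over $\Nc_{i,k}^+$) so that the union of the preserved $i$-to-$\Sc_{i,k}$ edges from all robots covers a spanning sub-graph of $\wt\Gc_{i,k+1}$.

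Having established connectivity of $\wt\Gc_{i,k+1}$ for the chosen $i$, I would note that the argument is symmetric in $i$: hypothesis (ii) holds \emph{for all} $i \in \Vc$, hence $\wt\Gc_{i,k+1}$ is connected for every $i \in \Vc$. Combined with hypothesis (i) that $\Gc_k$ is connected, both premises of Lemma~\ref{lem1} are satisfied, which immediately yields that $\Gc_{k+1}$ is connected, completing the proof.

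The main obstacle I anticipate is bridging the gap between the statement ``robot $i$ is connected with robots in $\Sc_{i,k}$'' and the stronger structural property ``the entire sub-graph $\wt\Gc_{i,k+1}$ is connected''. This really hinges on what $\Sc_{i,k}$ is guaranteed to be by Algorithm~\ref{Alg1}; without access to the algorithm's specification one must invoke its design property (namely, that $\Sc_{i,k}$ is chosen so that maintaining only the edges between $i$ and $\Sc_{i,k}$ suffices to keep $\wt\Gc_{i,k+1}$ connected). Once that invariant is stated, the rest of the proof is a direct reduction to Lemma~\ref{lem1}.
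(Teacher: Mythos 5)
Your high-level route is the same as the paper's: the published proof is itself a one-line reduction to Lemma~\ref{lem1} together with a pointer to the analogous argument in \cite{binh2024ipp}, so structurally you have identified the intended reduction. The problem is that the ``main obstacle'' you name at the end is not a peripheral detail---it is the entire mathematical content of the theorem---and your proposal both leaves it open and guesses the wrong invariant for Algorithm~\ref{Alg1}. The algorithm does not compute a spanning tree rooted at $i$ or a connected dominating set over $\Nc_{i,k}^+$; it is a relative-neighborhood-graph pruning rule: neighbor $j$ is dropped from $\Sc_{i,k}$ exactly when some other neighbor $\ell$ satisfies $\Vert \pbf_{i,k}-\pbf_{\ell,k}\Vert_2 < \Vert \pbf_{i,k}-\pbf_{j,k}\Vert_2$ and $\Vert \pbf_{j,k}-\pbf_{\ell,k}\Vert_2 < \Vert \pbf_{i,k}-\pbf_{j,k}\Vert_2$, i.e.\ when the ``lune'' between $i$ and $j$ contains a common neighbor.

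The missing argument is an induction on the sorted edge lengths of $\Gc_k$ showing that the retained edges $\bigcup_{i\in\Vc}\{(i,j): j\in\Sc_{i,k}\}$ connect the same vertex pairs as $\Gc_k$ does. A globally shortest edge is never pruned, since a witness $\ell$ would create a strictly shorter edge. For a pruned edge $(i,j)$, the witness $\ell$ lies within distance $\Vert\pbf_{i,k}-\pbf_{j,k}\Vert_2\leq R$ of both endpoints, so $(i,\ell)$ and $(\ell,j)$ are strictly shorter edges of $\Gc_k$; by the induction hypothesis each of these pairs is joined by a path of retained edges, and concatenation joins $i$ to $j$. Hence the retained edges form a connected spanning subgraph of $\Gc_k$, every edge of which survives to step $k+1$ by hypothesis (ii); this yields connectivity of $\Gc_{k+1}$ either directly or, after localizing the argument to each $\wt\Gc_{i,k+1}$, via Lemma~\ref{lem1} as you intended. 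Without this step your proof is a correct skeleton resting on an unverified and, as phrased, inaccurate description of what $\Sc_{i,k}$ actually is.
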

\begin{proof}
Based on Lemma \ref{lem1}, the proof of Theorem \ref{pres} follows the same steps with similar arguments as in the proof of \cite[Lemma 3.6]{binh2024ipp}.  
\end{proof}

\begin{algorithm}[t]
	\caption{Distributed Connectivity Preservation \label{Alg1}}
	\begin{algorithmic}[1]
		\Statex{{\bf Input} Set of neighbors $\Nc_{i,k}$, their positions $\pbf_{j,k}$, communication radius $R_i$.}
		\Statex{{\bf Output} Set of robots $\Sc_{i,k}$($\subset \Nc_{i,k}$) to be preserved.}
		\State{{\bf Initiate:}} $\Sc_{i,k} = \emptyset$
		\For{$j \in \Nc_{i,k}$}
        \State{check = {\bf true}}
		\For{$\ell \in \Nc_{i,k}\setminus\{j\}$, $c_k (\ell,j) > 0$}
	\If{$\Vert \pbf_{i,k} - \pbf_{\ell,k} \Vert_2, \Vert \pbf_{j,k} - \pbf_{\ell,k} \Vert_2 < \Vert \pbf_{i,k} - \pbf_{j,k}  \Vert_2 $}
        \State{check = {\bf false}} 
	\EndIf
		\EndFor
        \If{check = {\bf true}} $\Sc_{i,k} :=  \Sc_{i,k} \cup \{j\}$
        \EndIf
		\EndFor
	\end{algorithmic}
\end{algorithm}





Let $\ybf_{\Dc_{i,k}}$ be a vector of all measurements that the robot~$i$ took up to time $t_k$, then the vector of local
measurements $\ybf_{\Dc_{i,k}}$ follows a multivariate Gaussian
distribution in the following form
\begin{equation}
    \ybf_{\Dc_{i,k}} \sim \Gc\Pc(\bs\mu_{\Dc_{i,k}}, \Sigma_{\Dc_{i,k}})
\end{equation}
where $\bs\mu_{\Dc_{i,k}}$ is a mean vector with regard to the local dataset $\Dc_{i,k}$, $\Sigma_{\Dc_{i,k}} = \Kc(\Dc_{i,k}^\pbf,\Dc_{i,k}^\pbf,\Dc_{i,k}^\tau,\Dc_{i,k}^\tau) + \sigma_{i,k}^2 I$ denotes a covariance matrix with noise term.
%
In what follows, for unobserved locations of interest  
\begin{align}
\hat\pbf_{i,\Hc} = [\hat\pbf_{i,k+1}^\top,\dots, \hat\pbf_{i,k+H}^\top]^\top  \in \Rbb^{\ell\times H}  
\end{align}
corresponds to specific time instants $\tbf_\Hc = [t_{k+1}, \dots, t_{k+H}]^\top$ and an unobserved vector of latent variables  $\hat\zbf_{i,\Hc} = [z(\hat\pbf_{i,k+1},t_{k+1}),\dots, \hat z(\hat\pbf_{i,k+H},t_{k+H})]^\top$.
Here, $\Hc = \{0,1,\dots, H-1\}$ with $0 < H \in\Nbb$ represents the predictive horizon. Additionally, let us use $(\hat\bullet)$ for an unobserved vector of latent variables, locations, and their corresponding covariance matrices.
Then, following a multivariate Gaussian distribution, it has
\begin{equation}
    \bbm \ybf_{\Dc_{i,k}} \\  \hat{\zbf}_{i,\Hc} \ebm = \Gc\Pc\left( \bbm \bs\mu_{\Dc_{i,k}} \\ \hat{\bs\mu}_{i,\Hc} \ebm,\bbm \Sigma_{\Dc_{i,k}}&\hat\Sigma_{\Dc_{i,k}\Hc} \\ \hat\Sigma_{\Dc_{i,k}\Hc}^\top & \hat\Sigma_{i,\Hc} \ebm \right)
\end{equation}
in which matrices $\hat\Sigma_{\Dc_{i,k}\Hc}, \hat\Sigma_{i,\Hc}$ are obtained from a spatio-temporal covariance function $\Kc(\pbf,\pbf^\prime,t,t^\prime)$ with respect to the data set $\Dc_{i,k}$ and unobserved locations $\hat\pbf_{i,\Hc}$.
In addition, $\hat{\bs\mu}_{i,\Hc}$ denotes mean vectors with respect to  unobserved locations $\hat\pbf_{i,\Hc}$.
According to \cite{williams2006gaussian}, the conditional distribution of unobserved positions is:
\begin{equation}
\hat{\zbf}_{i,\Hc}|\Dc_{i,k} \sim \Gc\Pc(\hat \mu_{i,\Hc|\Dc_{i,k}}, \hat \Sigma_{i,\Hc|\Dc_{i,k}}),
\end{equation}
where matrices $\hat \mu_{\Hc|\Dc_{i,k}}, \hat \Sigma_{i,\Hc|\Dc_{i,k}}$ are given by the following regression:
\begin{align*}
&\hat \mu_{i,\Hc|\Dc_{i,k}} = \hat\mu_{i,\Hc} + \hat\Sigma_{\Dc_{i,k}\Hc}^\top  \Sigma_{\Dc_{i,k}}^{-1} (\ybf_{\Dc_{i,k}} - \mu_{\Dc_{i,k}}),
\\
&\hat\Sigma_{i,\Hc|\Dc_{i,k}} = \hat \Sigma_{i,\Hc} - \hat\Sigma_{\Dc_{i,k}\Hc}^\top  \Sigma_{\Dc_{i,k}}^{-1} \hat\Sigma_{\Dc_{i,k}\Hc}.
\end{align*}






Let $\Pc_{i,k} = \{\sbf_1, \sbf_2, \dots, \sbf_n\} \in \Qc^n$  be a
set of locations in which robot $i$ can move in $\Qc$
between $H$ consecutive samplings $t_k$ and $t_{k+H}$. Let $\Tc_{k+H} = \{ \tau_1, \tau_2, \dots, \tau_m \} \in [t_k,t_{k+H}]^m$ ($ t_k < \tau_1 < \tau_2 < \dots< \tau_m = t_{k+H}$ and $\{t_{k+1}, \dots,$ $t_{k+H}\} \in \Tc_{k+H}$) be a set of time stamps between $t_k$ and $t_{k+H}$. Denote $\Uc_{i,k+H} = \{\hat y(\sbf, \tau)| \sbf \in \Pc_{i,k}, \tau \in \Tc_{k+H} \}$ 
as a vector of latent variables corresponding to $\Pc_{i,k}$ and $\Tc_{k+H}$. 

\begin{problem}
Find the optimal path $\hat\pbf_{i,\Hc}$ of each robot $i$ in the mobile robot network at time step $t_k$, leading to the lowest
uncertainties at all unmeasured locations of interest
\begin{equation}
    \hat\pbf_{i,\Hc} = \argmin S(\Uc_{i,k+H}|\Dc_{i,k}^y, \hat\ybf_{i,\Hc}),
    \label{problem}
\end{equation}
where $S(\bullet)$ is the conditional entropy.
\end{problem}

By using the chain rule for conditional entropy \cite{cover1999elements}, we have
    $S(\Uc_{i,k+H}|\Dc_{i,k}^y, \hat\ybf_{i,\Hc}) =  
    S(\Uc_{i,k+H}, \hat\ybf_{i,\Hc}|\Dc_{i,k}^y) - S(\hat\ybf_{i,\Hc}|\Dc_{i,k}^y)$.
It should be noted that $\hat\ybf_{i,\Hc}$ is a vector of latent variables at locations $\hat\pbf_{i,\Hc}$ and time $t_{k+1},\dots, t_{k+H}$.
We assume that $\hat\pbf_{i,\Hc} \in \Pc_{i,k}$ then $\hat\ybf_{i,\Hc}$ is contained in $\Uc_{i,k+H}$ ($t_{k+1},\dots, t_{k+H}\in \Tc_{k+H}$).
Then, $S(\Uc_{i,k+H}, \hat\ybf_{i,\Hc}|\Dc_{i,k}^y) = S(\Uc_{i,k+H}|\Dc_{i,k}^y)$ is constant. 
Therefore, it can be
clearly seen that (\ref{problem}) is converted to $\hat\pbf_{i,\Hc} = \argmax S(\hat\ybf_{i,\Hc}|\Dc_{i,k}^y)$. The conditional entropy of a multivariate Gaussian
distribution of random variables $\hat\ybf_{i,\Hc}$ at unobserved locations $\hat\pbf_{i,\Hc}$ at time $\tbf_\Hc$ is given by a closed form \cite{cover1999elements}: $S(\hat\ybf_{i,\Hc}|\Dc_{i,k}^y) = \frac{1}{2} \logdet \hat \Sigma_{i,\Hc|\Dc_{i,k}} + \text{const}$.

\begin{figure}
    \centering
    \includegraphics[width = 1.05\linewidth]{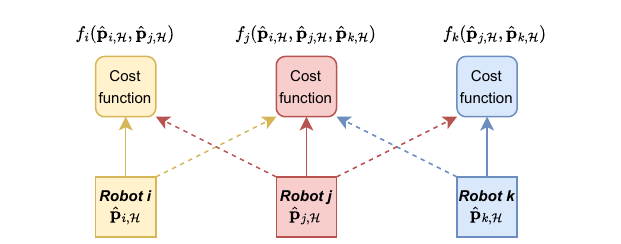}
    \caption{The architecture of local cost functions}
    \label{fig1}
    \vspace{-1em}
\end{figure}

If each robot determines its future path independently, the correlation between the optimal paths will be ignored.
The future paths are possibly determined to be close to each other.
Thus, we use virtual paths (a copy of the neighbor path) to construct the cost function as described in Fig. \ref{fig1}.
Let us define $\hat\pbf_{i,\Hc}^+ = \big[ \hat\pbf_{j,\Hc} \big]_{j\in \Nc_{i,k}^+}$.
This paper presents a cross-correlated local cost function of robot $i$ as $f_i(\hat\pbf_{i,\Hc}^+) = \logdet   \hat\Sigma_{i,\Hc|\Dc_{i,k}}(\hat\pbf_{i,\Hc}^+)$ which is associated with its future path and those of its neighbors.
As a result, the informative path planning for a multi-robot system in optimally mapping a spatio-temporal field is formulated in the following optimization problem: for all $h \in \Hc$
\begin{subequations}
\begin{align}
\max_{\hat\pbf_{i,\Hc}^+} &\sum_{i=1}^M \logdet   \hat\Sigma_{i,\Hc|\Dc_{i,k}}(\hat{\pbf}_{i,\Hc}^+),
\label{OP1}
\\
\text{s.t.~} 
 &\hat\pbf_{i,k+h+1} =  A_k\hat\pbf_{i,k+h} + B_k \ubf_{i,k+h},
 \\
 &\hat\pbf_{i,k} \in \Qc,~\Vert \ubf_{i,k+h} \Vert_\infty \leq  \delta_i,
\label{OP1_opti2}
\\
&\Vert \hat\pbf_{i,k+h} - \hat\pbf_{j,k+h} \Vert_2 \leq R,\, \forall j \in \Sc_{i,k}. \label{OP1_opti3}
\end{align}
\end{subequations}
\begin{remark}
Compared to the previous work \cite{binh2024ipp}, the correlation between the data collected by the robot $i$ and its neighbor's future paths is exploited. Therefore, each robot has an awareness of the future movements of its neighbors in its cost function.
\end{remark}


\section{Distributed IPP}
It should be noted that the objective function (\ref{OP1}) and the connectivity constraint (\ref{OP1_opti3}) are involved in at least two robots. To solve the optimization in a distributed way, each robot should create copies of its neighbors.
Let 
$\bs\zeta_{ij} = \big[ \bs\zeta_{ij,k+1}^\top,\dots, \bs\zeta_{ij,k+H} \big]^\top$ (for all $j \in \Nc_{i,k}$) represent the virtual positions 
of the robot $j$ 
estimated by the robot $i$. Denote a vector $\bs\zeta_i = \left[ \bs\zeta_{ij} \right]_{j\in \Nc_{i,k}^+}$ where $\bs\zeta_{ii} = \hat\pbf_{i,\Hc}$.
We tend to achieve $\bs\zeta_{ij,\Hc} = \hat\pbf_{j,\Hc}$ (for all $j \in \Nc_{i,k}$).
The optimization problem (\ref{OP1}) is equivalent to:
\begin{subequations}
\begin{align}
\max_{\bs\zeta_i} &\sum_{i=1}^M \logdet  \hat\Sigma_{i,\Hc|\Dc_{i,k}}(\bs\zeta_i),
\label{OP2}
\\
\text{s.t.~} &\bs\zeta_{ii} = \bs\zeta_{ji},  ~\forall (i,j)\in \Vc\times \Nc_{i,k},
\\
& \bs\zeta_{ii,k+h+ 1} = A_k \bs\zeta_{ii,k+h} + B_k \ubf_{i,k+h},
\\
&\bs\zeta_{ii} \in \Qc^H,~
\Vert \ubf_{i,k+h} \Vert_\infty \leq  \delta_i,
\label{org_opti2}
\\
&\Vert \bs\zeta_{ii,k+h} - \bs\zeta_{ij,k+h} \Vert_2 \leq R,  ~\forall (i,j)\in \Vc\times \Sc_{i,k}, \label{org_opti3}
\end{align}
\end{subequations}
for all $h\in \Hc$. 
For the sake of simplicity, we define $f_i(\bs\zeta_i) = -\logdet  \hat\Sigma_{i,\Hc|\Dc_{i,k}}(\bs\zeta_i)$.
The optimization problem (\ref{OP2}) can be solved in a distributed fashion by using proximal alternating direction method of multiplier (proximal ADMM) presented in \cite{le2021efficient,nguyen2023real}. However, the method requires gradient updates $\nabla f_i$ for every iteration. It should be noted that the computational complexity of $\nabla f_i$ is $\Oc(d|\Dc_{i,k}|^2)$ where $d$ is the input dimension and $|\Dc_{i,k}|$ is the number of local data, therefore updating $\nabla f_i$ at every iteration accounts for many computing resources.

The cost function (\ref{OP2}) is highly nonconvex, resulting in a great computational burden. Thus, let us approximate (convexify) the cost function around the previous value $\bs\zeta_i^{k-1}$ at step $k-1$
$\tilde f_i(\bs\zeta_i;\bs\zeta_i^{k-1}) = f_i(\xbf) \!+\! \nabla f_i^\top(\bs\zeta_i^{k-1})(\bs\zeta_i - \bs\zeta_i^{k-1}) \!+\! \frac{q_i}{2} 
\Vert \bs\zeta_i - \bs\zeta_i^{k-1} \Vert_2^2$
%
where $q_i > 0$ and $\nabla f_i(\bs\zeta_i^{k-1})$ represent the gradient of $f_i$ at $\bs\zeta_i^{k-1}$.
In the initial step, $\bs\zeta_i^{k-1}$ is selected from the initial position of the robots, that is, $\bs\zeta_{ii,h}^0 = \pbf_{i,0}$ and $\bs\zeta_{ij,h}^0 = \pbf_{j,0}$ for all $ h \in \Hc, j \in \Nc_{i,0}$. To simplify local constraints, let us define 
\begin{align*}
\Bc_{i,k} = \big\{
\bs\zbf = &\big[ \bs\zbf_{j} \big]_{j\in \Nc_{i,k}^+} \big| 
\bs\zbf_{j} = \left[ \bs\zbf_{j,h} \in \Qc \right]_{h = 1,\dots,H},
\\
&\zbf_{i,h+1} = A_k\zbf_{i,h} + B_k \ubf_{i,h} , \Vert  \ubf_{i,h}\Vert_\infty\leq \delta_i,
\\
\Vert &\bs\zbf_{i,h} \!-\! \bs\zbf_{j,h} \Vert_2 \leq R,~ j \in \Sc_{i,k}\big\}
\end{align*}
%
as a set of local constraints including robot dynamics and network connectivity.
At the initial step, positions $[{\bf 1}_H \otimes \pbf_{j,0}]_{j\in \Nc_{i,1}^+} \in \Bc_{i,0}$ for all $i$ because we already assumed that $\Gc_0$ is connected. Consequently, $\Bc_{i,0}$ is a non-empty set, and (\ref{OP1}) is feasible at the initial time step, and then $[{\bf 1}_H \otimes \pbf_{j,1}]_{j\in \Nc_{i,1}^+} \in \Bc_{i,1}$.
Sequentially with the next steps $k+1$, $\Bc_{i,k+1}$ is also nonempty. 
It can be observed that $\Bc_{i,k}$ is a convex set.
The optimization (\ref{OP2}) 
can be rewritten as
\begin{align}
\min \sum_{i=1}^N \tilde f_i(\bs\zeta_i), \text{~s.t.~} \bs\zeta_i \in  \Bc_{i,k},~ \bs\zeta_{ii} = \bs\zeta_{ji},
\label{OP3}
\end{align}
for all $i \in \Vc$ and $j\in \Nc_{i,k}$.
Next, the Lagrangian function of (\ref{OP3}) is defined by
$\Lc = \sum_{i=1}^{M} \Lc_i$,
where
\begin{align*}
\Lc_i = \tilde f_i(\bs\zeta_i) \!+\!
\sum_{j\in \Nc_{i,k}}^{} \bs\lambda_{ij}^\top( \bs\zeta_{ii} \!-\! \bs\zeta_{ji})
\end{align*}
where $\bs\lambda_i = [\bs\lambda_{ij}]_{j\in \Nc_{i,k}}$ is the dual variable. Then, using the dual decomposition method \cite{rush2012tutorial}, the optimization (\ref{OP3}) is handled by
\begin{align}
&\bs\zeta_1^{(n)}, \dots, \bs\zeta_N^{(n)} = \argmin_{\bs\zeta_i \in \Bc_{i,k}} \sum_{i=1}^{M} \Lc_i,
\label{AL1}
\\
&\bs\lambda_{ij}^{(n+1)} = \bs\lambda_{ij}^{(n)} - \alpha_n(\bs\zeta_{ii}^{(n)} - \bs\zeta_{ji}^{(n)}).
\label{AL2}
\end{align}
The optimization problem (\ref{AL1}) can be distributively solved. Indeed, the Lagrangian is written as $\Lc = \sum_{i=1}^N \big( \tilde  f_i(\bs\zeta_i) +\sum_{j\in \Nc_{i,k}} (  \bs\zeta_{ii}\bs\lambda_{ij} - \bs\lambda_{ji}\bs\zeta_{ij})\big) $. Accordingly,  the optimization (\ref{AL1}) is equivalent to
\begin{align}
    \bs\zeta_i^{(n)} &= \argmin_{\bs\zeta_i \in \Bc_{i,k}} \tilde  f_i(\bs\zeta_i) +\sum_{j\in \Nc_{i,k}} (  \bs\zeta_{ii}^\top\bs\lambda_{ij}^{(n)} -  \bs\zeta_{ij}^\top \bs\lambda_{ji}^{(n)}) \label{zeta_cal}
    \\
    &= \argmin_{\bs\zeta_i \in \Bc_{i,k}} (\nabla f_i(\bs\zeta_i^{k-1}) \!+\! \Lambda_i^{(n)} \!-\! q_i \bs\zeta_i^{k-1})^\top \bs\zeta_i \!+\! \frac{q_i}{2} 
    \non
\Vert \bs\zeta_i \Vert_2^2,
\end{align}
where $\Lambda_i^{(n)} = \left[ \sum_{j\in \Nc_{i,k}} \bs\lam_{ij}^{(n)\top},[-\bs\lam_{ji}^{(n)}]_{j\in \Nc_{i,k}}^\top\right]^\top$.
Note that (\ref{zeta_cal}) is formulated as a convex quadratic programming quadratic constraints (QCQP) that is solved effectively in polynomial time by solvers such as OSQP \cite{stellato2020osqp} or SOCP \cite{domahidi2013ecos}.
The distributed algorithm \ref{al2} is tailored to describe the steps to solve the optimization problem (\ref{OP3}).
\begin{algorithm}[t]
	\caption{Distributed solving of optimization \label{al2}) \label{alg:cap}}
	\begin{algorithmic}[1]
		\Statex{{\bf Input:} Number of robots $M$, set of neighbors $\Nc_{i,k}$, and a small tolerate error $\epsilon$.}
		\Statex{{\bf Output:} $\bs\zeta_{ii} = \left[ \hat \pbf_{i,t+h} \right]_{h \in \Hc}$}
		\State{{\bf Initiate:}
			$\bs\zeta_{ii}^{(0)} =  {\bf 1}_H \otimes \pbf_{i,0}$,  $\bs\zeta_{ij}^{(0)} =  {\bf 1}_H \otimes \pbf_{j,0}$
		}
		\Loop
		\State{Robot $i$ sends $\bs\lam_{ij}^{(n)}$ to $j$ and receives $\bs\lam_{ji}^{(n)}$ from $j$}
		\State{Compute $\bs\zeta_i^{(n+1)}$ by (\ref{zeta_cal})}
        \State{Robot $i$ sends $\bs\zeta_{ij}^{(n)}$ to $j$ and receives $\bs\zeta_{ji}^{(n)}$ from $j$}
        \State{Compute $\bs\lam_i^{(n+1)}$ by (\ref{AL2})}
		\If{$\max_{i\in \Vc,j \in \Nc_{i,k}} \Vert \bs\zeta_{ii}^{(n+1)} - \bs\zeta_{ij}^{(n+1)} \Vert_2 < \ep$} 
		\State{
			\Return $\bs\zeta_{ii}^{(n+1)}$
		}
		\EndIf
		\EndLoop
	\end{algorithmic}
\end{algorithm}

\section{Simulations and Discussions}
\begin{figure}[b]
    \centering
    \includegraphics[width=0.7\linewidth]{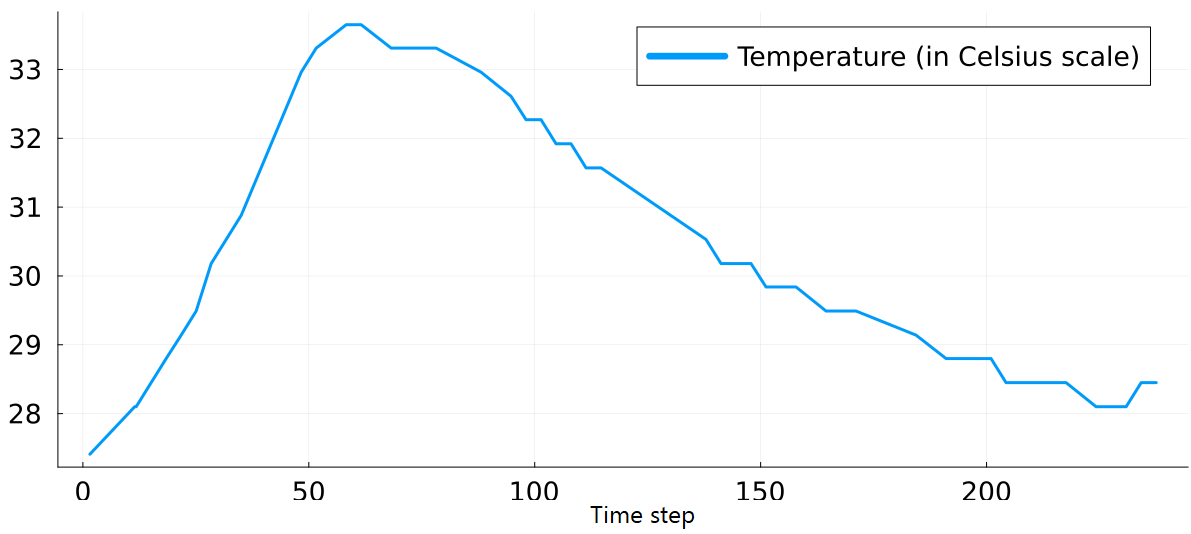}
    \caption{Temperature evolution measured by the $10^{th}$ sensor.}
    \label{sen10}
\end{figure}

To demonstrate the effectiveness of the proposed approach, we implemented it in a synthetic environment by using the real-world temperature dataset \cite{dataset_soil}. It is noted that the temperature dataset was spatially and temporally collected by 12 fixed-location sensors during 24 hours in a crop area of 20 m $\times$ 100 m, which resulted in 756 measurements in total. To exemplify the variation of the temperature overtime, we depict the data measured by the $10^{th}$ sensor in Fig. \ref{sen10}.
To verify our IPP algorithm, we first simulated a spatio-temporal field of the real-world temperature dataset by building a model from all 756 measurements. This model is called ground truth (GT). Then whenever a robot moves to a particular location in an unknown area and takes a virtual measurement at a particular time, the GT model would estimate that virtual measurement for the robot. In other words, the robots in our simulation virtually took the ``real'' measurements when they were exploring the field.

In the simulations, 6 robots with a communication range of $R = 20 \, [m]$ were chosen to conduct a task of mapping the spatio-temporal temperature field in an unknown area with the same dimensions of 20 m $\times$ 100 m. At the beginning, none of the robots knew anything about the field. They could only gather temperature information over their navigation.
In addition, let us take $\Vert \Delta v_{i,k}\Vert \leq 1\, [\rm{m/s}]$ and $\Vert \Delta \theta_{i,k}\Vert \leq 1 \, [\rm{rad}]$.
The covariance function 
was selected as a serial combination between square exponential and Matérn ($\frac{1}{2}$) functions 
as follows
\begin{align*}
\Kc(\pbf, \pbf^\prime, t, t^\prime) =   \sigma^2 \exp\left(\frac{\Vert \pbf- \pbf^\prime\Vert_2^2}{2\ell_s^2} + \frac{| t- t^\prime|}{\ell_t}\right),
\end{align*}
where $\ell_s$ and $\ell_t$ are spatial and temporal length scales, respectively.
We also considered wheeled mobile robots with the following dynamics 
\begin{align}
    \dot\pbf_i = v_i \bbm \cos \theta_i & \sin \theta_i\ebm^T
    \label{cycle}
\end{align}
where $v_i$ is the longitude velocity and $\theta_i$ is the heading angle of robot $i$. When discretizing (\ref{cycle}) by the Euler method, it has
$\pbf_{i,k+1} \approx \pbf_{i,k} + \tau v_{i,k} [ \cos \theta_{i,k},  \sin \theta_{i,k}]^\top$, where $\tau = 1 \, [s]$ is the sampling interval.
The dynamics of the robots are linearized by using first-order approximation as follows
\begin{align}
\hat\pbf_{i,k+h} = 2&\hat\pbf_{i,k+h-1} - \hat\pbf_{i,k+h-2}    
+\tau \bbm \cos\theta_{i,k-1} \\  \sin \theta_{i,k-1}\ebm \Delta v_{i,k} 
\non\\
& 
+ \tau\bbm -v_{i,k-1} \sin\theta_{i,k-1}
\\
 v_{i,k-1} \cos\theta_{i,k-1}
\ebm \Delta\theta_{i,k},
\end{align}
where $\Delta v_{i,k} = v_{i,k} - v_{i,k-1}$ and $\Delta \theta_{i,k} = \theta_{i,k} - \theta_{i,k-1}$. With respect to \eqref{sysdyn}, we have $A_k = I$ and $B_k = \tau \bbm \cos\theta_{i,k-1}  &-v_{i,k-1} \sin\theta_{i,k-1}\\ \sin \theta_{i,k-1} & v_{i,k-1} \cos\theta_{i,k-1}\ebm$, $\ubf_{i,k} = \bbm \Delta v_{i,k} \\  \Delta\theta_{i,k}\ebm$.
\begin{figure}[tb]
    \centering
    \begin{subfigure}[b]{0.485\linewidth}
        \includegraphics[width = \textwidth]{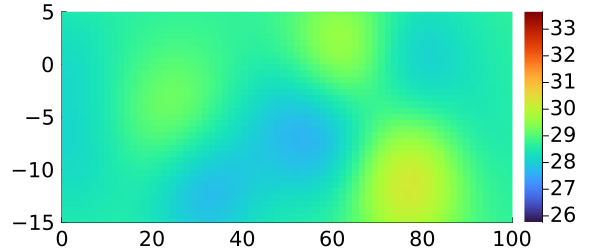}
        \caption{GT: Initial}
    \end{subfigure}
    \begin{subfigure}[b]{0.485\linewidth}
        \includegraphics[width = \textwidth]{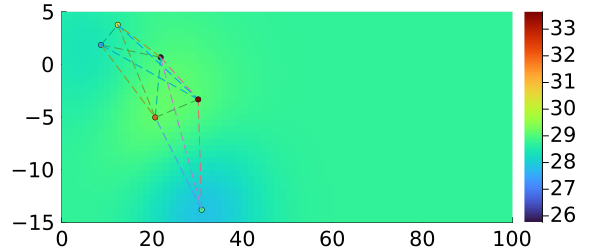}
        \caption{P: Initial}
    \end{subfigure}
    \begin{subfigure}[b]{0.485\linewidth}
        \includegraphics[width = \textwidth]{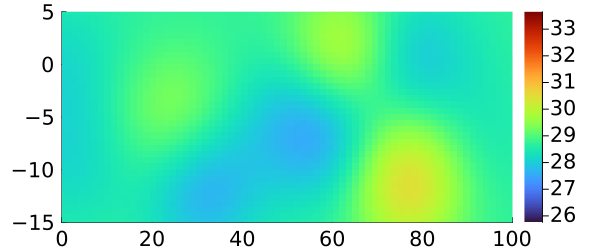}
        \caption{GT: at 20 steps}
    \end{subfigure}
    \begin{subfigure}[b]{0.485\linewidth}
        \includegraphics[width = \textwidth]{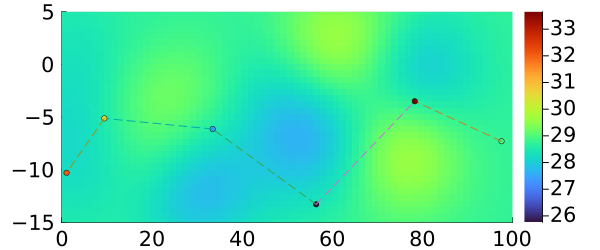}
        \caption{P: at 20 steps}
    \end{subfigure}
    \begin{subfigure}[b]{0.485\linewidth}
        \includegraphics[width = \textwidth]{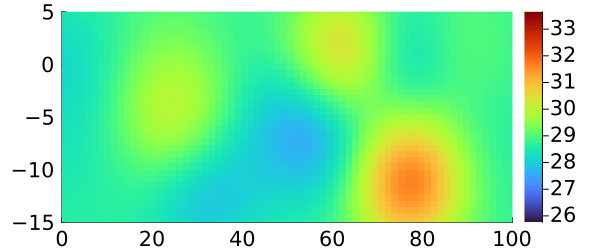}
        \caption{GT: at 40 steps}
    \end{subfigure}
    \begin{subfigure}[b]{0.485\linewidth}
        \includegraphics[width = \textwidth]{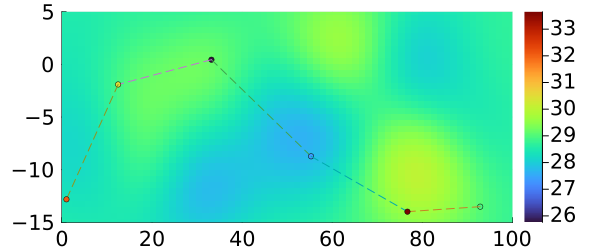}
        \caption{P: at 40 steps}
    \end{subfigure}
    \begin{subfigure}[b]{0.485\linewidth}
        \includegraphics[width = \textwidth]{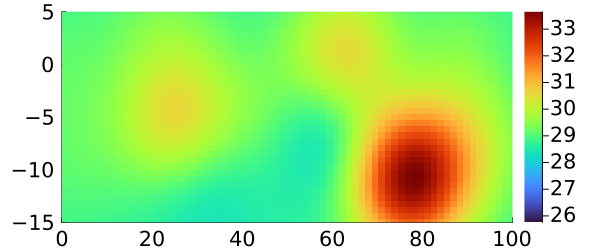}
        \caption{GT: at 60 steps}
    \end{subfigure}
    \begin{subfigure}[b]{0.485\linewidth}
        \includegraphics[width = \textwidth]{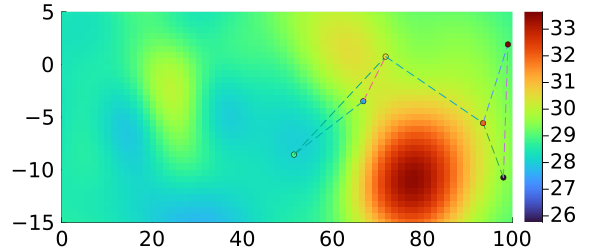}
        \caption{P: at 60 steps}
    \end{subfigure}
    \begin{subfigure}[b]{0.485\linewidth}
        \includegraphics[width = \textwidth]{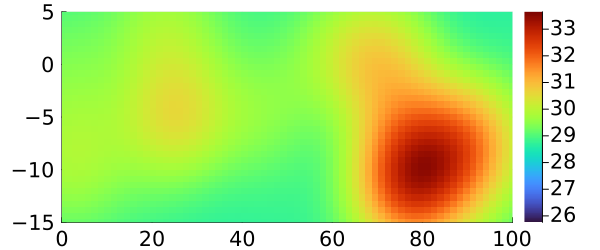}
        \caption{GT: at 80 steps}
    \end{subfigure}
    \begin{subfigure}[b]{0.485\linewidth}
        \includegraphics[width = \textwidth]{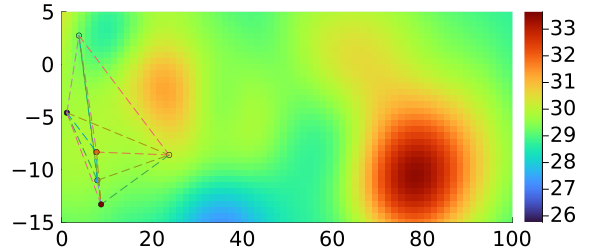}
        \caption{P: at 80 steps}
    \end{subfigure}
    \caption{Ground truth (GT) and prediction (P) of the spatio-temporal temperature.}
    \label{GT}
    \vspace{-2em}
\end{figure}

\begin{figure*}[t]
    \centering
    \includegraphics[width = 0.76\linewidth]{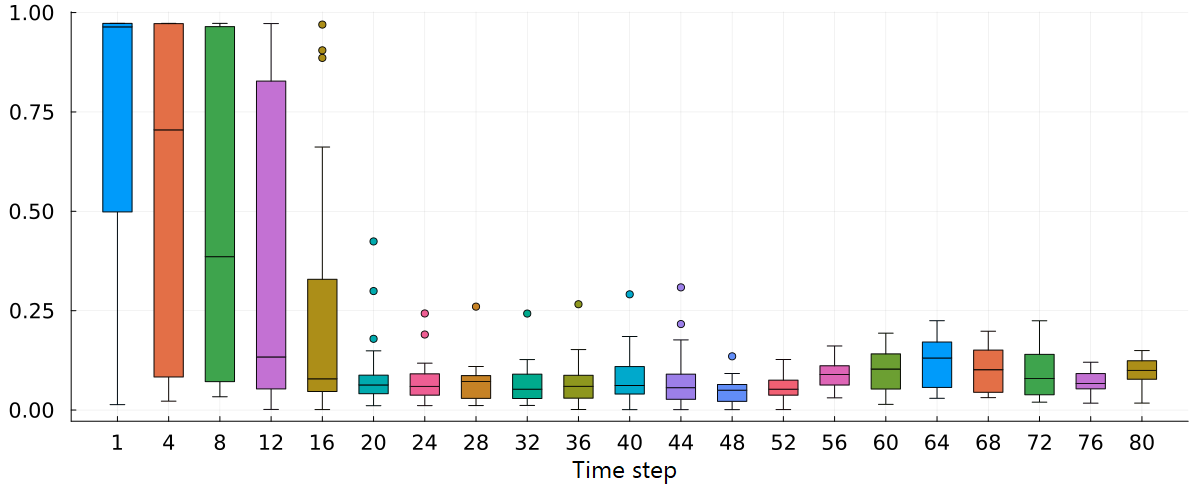}
    \caption{Box plots of prediction uncertainties at 21 test locations over time steps. Outliers (dots) exist at some time steps.}
    \label{boxp}
    \vspace{-2em}
\end{figure*}

In the IPP context, a group of robots aims to map a spatio-temporal field in an unknown environment. The robots navigate the environment while taking measurements at their moving steps. And our proposed distributed IPP algorithm provides the robots with optimal navigation in terms of gaining maximal information of the field in both space and time. In other words, the measurements taken by the robots run by our IPP approach carry most informative content of the spatio-temporal temperature field. To validate this fact, we exploited the measurements collected by the robots along their navigation paths and learned a GP model. It is noticed that this GP model was updated after every moving step of the robots as the temperature field was varying over both space and time. Since the robots could traverse to only a limited number of locations in the environment, we utilized the learned GP model to predict the temperature in the whole space at any expected time. Mappings of the spatio-temporal temperature field in the whole environment over moving steps of the robots are illustrated in the right column of Fig. \ref{GT}. For the comparison purposes, we also generated the mappings of the field by using the GT model, which are depicted in the left column of Fig. \ref{GT}. As can be seen from Fig. \ref{GT},  our method provides the comparative results that the robots could build the spatio-temporal maps intensively comparable to the ground truth.
It is also demonstrated in the right column of Fig. \ref{GT} that connectivity of the robots was well maintained overtime. Code for the simulation is written in Julia and can be found in {\tt github.com/AACLab/SpaTemIPP.git}

In practice, apart from mapping a spatio-temporal field in a whole environment, one may be interested in values of the field at some specific locations. Of course, these specific locations are not accessible by robots; hence no measurement can be made. In that case, we can use the learned GP model to temporally predict the field at those locations. To verify efficacy of our algorithm in location level, we chose 21 testing points on a grid with $X = \bbm 20, &30, &40, &50, &60, &70, &80 \ebm$ and $Y = \bbm 0, &-5 &-10 \ebm$. We exploited our learned GP model to predict the temperature at these 21 locations over 80 time steps. The prediction uncertainties at all 21 locations were summarized in a box plot. All the box plots over 80 time steps are demonstrated in Fig. \ref{boxp}. Apparently, in the first few steps when the robots did not have much information about the field, the prediction uncertainties are high. However, after about 16 time steps when the robots learned well about the field in both space and time, the uncertainties significantly reduce. Though the prediction uncertainties are considerably small from 20 time steps onwards, there are still some minor variations among them since the temperature kept changing overtime as shown in Fig. \ref{sen10}.


\section{Concluding Remarks}
This paper has addressed the problem of mapping spatio-temporal environmental field using multiple robots based on Gaussian process regression. The IPP problem has been formulated in terms of multiple prediction steps with cross-correlation cost functions that guarantee the connectivity of the robot network during the exploration time. 
By using the dual composition method, we have solved the IPP problem in a distributed manner.
The efficacy of the proposed approach was verified in a synthetic experiment utilizing a real-life dataset.
In the future works, we will consider the synchronous update of measurements in the robot network.

\bibliography{References}
\bibliographystyle{IEEEtran}

\end{document}